\newtheorem{theorem}{Theorem}
\newtheorem*{theorem*}{Theorem}
\title{An Empirical Comparison of Vocabulary Expansion \\and Initialization Approaches for Language Models}
\author{Nandini Mundra$^{1,*}$ \quad Aditya Nanda Kishore$^{1,*}$ \quad Raj Dabre$^{1,3,6}$ \\ 
    {\bf Ratish Puduppully}$^{4,\dagger}$ \quad {\bf Anoop Kunchukuttan}$^{1,5}$ \quad {\bf Mitesh M. Khapra}$^{1,2}$ \\
    $^{1}$Indian Institute of Technology Madras \quad $^{2}$Nilekani Centre at AI4Bharat \\
    $^{3}$National Institute of Information and Communications Technology, Japan \\ 
    $^{4}$IT University of Copenhagen, Denmark \quad $^{5}$Microsoft India \\
    $^{6}$Indian Institute of Technology Bombay\\
    \textbf{Correspondence:} \texttt{miteshk@cse.iitm.ac.in, raj.dabre@nict.go.jp}}
\begin{document}
\maketitle
\def\thefootnote{*}\footnotetext{Equal contribution.}
\def\thefootnote{$\dagger$}\footnotetext{Work done while the author was at A$^{*}$STAR, Singapore.}
\def\thefootnote{\arabic{footnote}}

\begin{abstract}
Language Models (LMs) excel in natural language processing tasks for English but show reduced performance in most other languages. This problem is commonly tackled by continually pre-training and fine-tuning these models for said languages. A significant issue in this process is the limited vocabulary coverage in the original model’s tokenizer, leading to inadequate representation of new languages and necessitating an expansion of the tokenizer. The initialization of the embeddings corresponding to new vocabulary items presents a further challenge. Current strategies require cross-lingual embeddings and lack a solid theoretical foundation as well as comparisons with strong baselines. In this paper, we first establish theoretically that initializing within the convex hull of existing embeddings is a good initialization, followed by a novel but simple approach, \textit{Constrained Word2Vec (CW2V)}, which does not require cross-lingual embeddings. Our study evaluates different initialization methods for expanding RoBERTa and LLaMA 2 across four languages and five tasks.  The results show that CW2V performs equally well or even better than more advanced techniques. Additionally, simpler approaches like multivariate initialization perform on par with these advanced methods indicating that efficient large-scale multilingual continued pretraining can be achieved even with simpler initialization methods. We release our code publicly.\footnote{\url{https://github.com/AI4Bharat/VocabAdaptation_LLM/tree/CW2V}}

\end{abstract}

\section{Introduction}

Language models are adept at a wide spectrum of natural language processing (NLP) tasks \citep{liu2021pretrain, chung2022scaling, JMLR:v24:22-1144, 10.5555/3600270.3602070, goyal2023news, touvron2023llama}. However, the best-performing language models work well for English but have inferior capabilities in other languages. A common method to improve the capabilities of other languages is to continually pre-train and finetune the English model for other languages \cite{NEURIPS2019_c04c19c2}. This approach builds upon the capabilities acquired through large-scale English pre-training and focuses on aligning the English and other language spaces, making efficient re-use of compute and data resources \citep{cahyawijaya2023instructalign, zhang2023bayling}. One of the major challenges for LLM adaptation is the lack of vocabulary coverage in the original model's tokenizer for the new language. This would mean the inability to represent the new language if the vocabulary is totally different or inefficient tokenization with high fertility in the case of inadequate vocabulary representation.

A solution is to expand the tokenizer to incorporate new vocabulary and then perform continual pre-training on monolingual data from the new language to adapt the model to the new language \citep{cui2023efficient, nguyen2023seallms, minixhofer-etal-2022-wechsel}. In this scenario, an important question is: \textit{How do we initialize the embeddings of the new vocabulary items?} Various methods have been proposed in the literature for the initialization of the new token embeddings, from simple random initialization \citep{antoun-etal-2020-arabert, martin-etal-2020-camembert} to the mean of embeddings \citep{gee-etal-2022-fast} to sophisticated methods such as OFA among others \citep{minixhofer-etal-2022-wechsel, Dobler_2023, tran2020english, liu2024ofa} that learn the new embeddings as a function of existing embeddings using external resources and tools like cross-lingual word-vectors and bilingual dictionaries. However, there is no theoretical basis for what constitutes a \textit{good initialization}. Furthermore, in existing works, comparisons with simple, naive initialization methods across different model sizes are missing.

In this paper, we theoretically define and analyze the properties of a \textit{good initialization}. We prove that initializing embeddings of new vocabulary embeddings to be in the convex hull of original embeddings ensures that the greedy generation of the existing language(s) is not impacted by the new vocabulary items on initialization. 
Based on these insights, we propose a simple learnable initialization approach which we dub as \textit{Constrained Word2Vec (CW2V)} which ensures initializations in the convex hull without needing cross-lingual embeddings. We conducted a comparative analysis of CW2V alongside 5 existing initialization strategies including OFA on two models containing varying  parameters, namely RoBERTa (125M) and LLaMa2 (7B), examining their impact through 5 downstream tasks across 4 languages. Our analysis of various initialization methods demonstrates that CW2V achieves better if not comparable performance with the previous best methods. Additionally, we find that simpler methods like multivariate or mean initialization, which ensure new embeddings remain within the convex hull, are comparable with more advanced approaches such as OFA.

\section{Related Work}

\noindent \textbf{Multilingual Models}: To create a multilingual model for specific languages, one method is to train the model from scratch on the target languages using MLM and CLM objectives \cite{workshop2023bloom, conneau-etal-2020-unsupervised}. However, this requires significant computational resources and data. A more efficient approach is to adapt an existing pre-trained language model (PLM) \cite{devlin2019bert, touvron2023llama, MosaicML2023Introducing} to the desired target language. There are two ways to adapt a PLM to a new language. The first is to fully adapt the model to the new language, replacing the source tokenizer and focusing only on the new language’s performance \cite{minixhofer-etal-2022-wechsel, artetxe-etal-2020-cross}. The second is to keep the original language support and add the new language, ensuring the model still performs well on the source language \cite{garcia-etal-2021-towards, liu2024ofa}. In this work, we focus on extending the language support of the PLM rather than replacing it. We do this by extending the source tokenizer, which requires effectively initializing the model's embedding layer and LM head for the added tokens in the vocabulary.

\noindent \textbf{Embedding Initialization Strategies}: Previous work has focused on different initialization strategies. Methods like FVT \cite{gee-etal-2022-fast} and \citet{hewitt2021initializing} use the mean of source PLM embeddings, while WECHSEL \cite{minixhofer-etal-2022-wechsel}, RAMEN \cite{tran2020english}, FOCUS \cite{Dobler_2023}, and OFA \cite{liu2024ofa} utilize external cross-lingual word vectors and source embeddings. However, these approaches rely on static embeddings. In contrast, we propose initialization strategy that learns new embeddings from the source PLM model and doesn't require static embeddings.

\noindent \textbf{Continual Pre-training}: A good initialization strategy provides a solid start for adapting a PLM to a new language by effectively initializing the new tokens in the embedding and LM head layers. However, to fully adapt the extended model to the new language, continued pre-training (CPT) \cite{wang2022multimodal, alabi2022, imani2023} is essential. Therefore, we performed CPT on target languages post initialization.



\section{Methodology}
We describe the core methodology in this work followed by theoretical proofs of \textit{good initializations} which motivate our own initialization approach, namely, Constrained Word2Vec.

\begin{figure}[t]
    \centering
    \includegraphics[scale = 0.28]{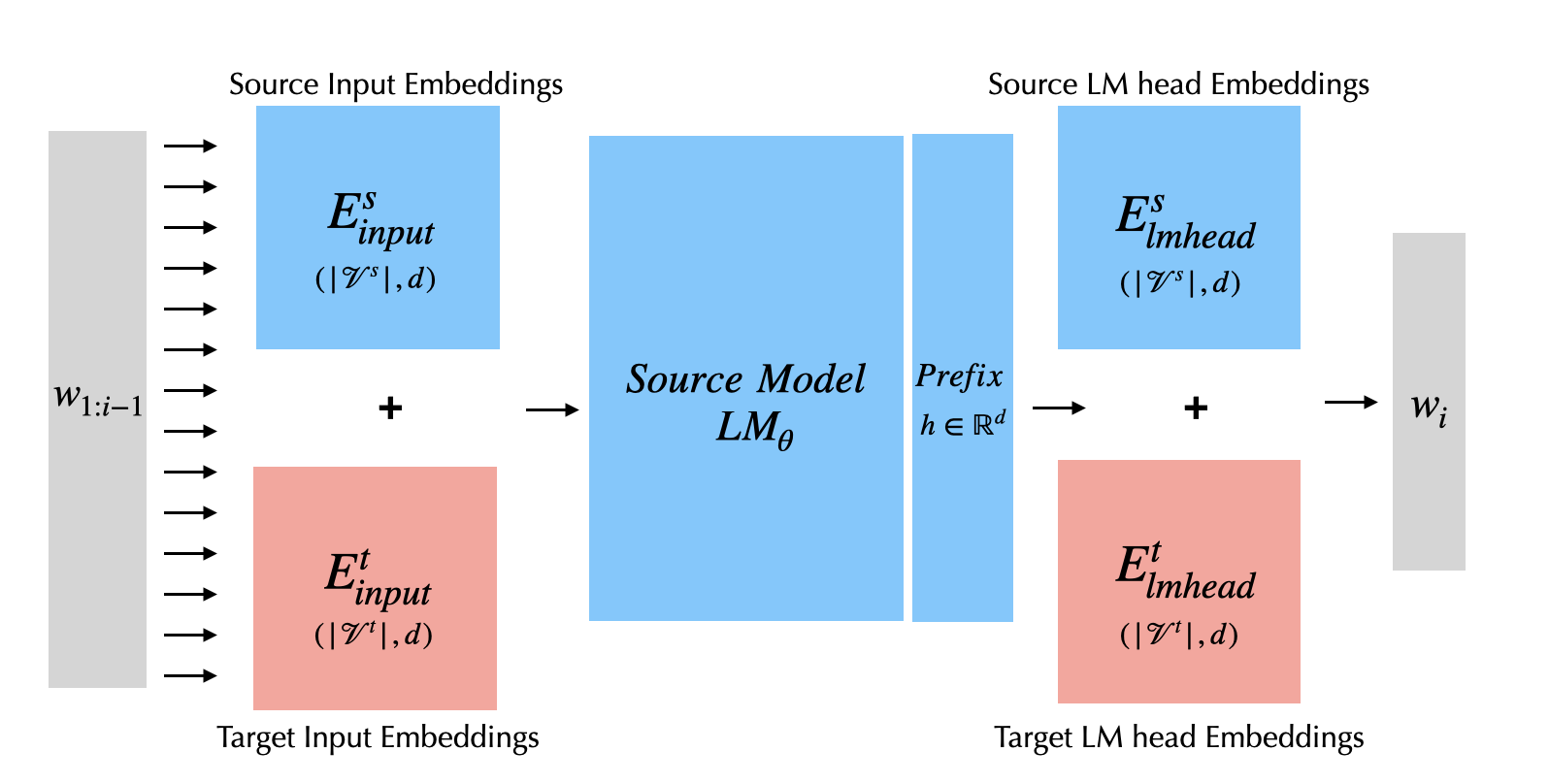}
    \caption{Setup for Vocabulary Expansion. Source model is shown in blue blocks, and expanded vocabulary embeddings are represented in red blocks. Source model parameters remain unchanged.}
    \label{fig:enter-label}
\end{figure}

\subsection{Vocabulary Expansion}
We adapt the same vocabulary expansion problem formulation as \citet{hewitt2021initializing}. Let $\theta$ be the parameters of a pre-trained neural source language model $LM_{\theta}^s$, and let $\mathcal{V}^s=\{v_1^s, v_2^s, \ldots, v_n^s\}$ be the vocabulary of $LM_{\theta}^s$. We will refer to $\mathcal{V}^s$ as the source vocabulary henceforth. Let $e_i^s \in \mathbb{R}^d$ be the sub-word embedding for word $i \in \mathcal{V}^s$. Let $E^s$ denote the language modeling head's (henceforth \textit{LM head}) embedding matrix  of $LM_{\theta}^s$ and this is our source embedding matrix. 
The probability of occurrence of the next word $w_i$ given the previous word sequence $w_{1:i-1}$, $p_\theta\left(w_i \mid w_{1:i-1}\right)$, is given by
\[
p_\theta\left(w_i \mid w_{1:i-1}\right) = \frac{\exp\left(h_{i-1}^{\top} e_{w_i}^s\right)}{\sum_{j \in \mathcal{V}^s} \exp\left(h_{i-1}^{\top} e_j^s\right)},
\]
where the prefix $h_{i-1} = \phi\left(w_{1:i-1}; LM_\theta^s\right) \in \mathbb{R}^d$ is the neural representation of the input using $LM_\theta^s$. 

In vocabulary expansion, we add $n'$ new sub-words $\notin \mathcal{V}^s$ forming the target vocabulary $\mathcal{V}^t = \{v_1^t, v_2^t, \ldots, v_{n'}^t\}$. This implies we need a new word embedding $e_j^t$ for each $j \in \mathcal{V}^t$ comprising in $E^t$. The new language model $LM_{\theta'}^t$ has parameters $\theta' = \theta \cup \{e_j^t; j \in \mathcal{V}^t\}$. The output distribution of $LM_{\theta'}^t$ given by $p_{\theta'}\left(w_i \mid w_{1:i-1}\right)$  is defined similarly as $p_\theta\left(w_i \mid w_{1:i-1}\right)$ but with the normalization factor involving $\mathcal{V}^s \cup \mathcal{V}^t$.


Our goal is to find initializations for $E^t$ such that the extended model not only retains its previous behavior but also can lead to good downstream performance for the languages corresponding to the new vocabulary with minimal continual pre-training.  Retaining performance in English is particularly beneficial, as the knowledge embedded in English models often supports performance in other languages \cite{pires-etal-2019-multilingual}. Figure~\ref{fig:enter-label} gives an overview of our approach. Note that in our notations so far we have only mentioned the LM head, but just as the LM head has an expansion ($E^{t}_{lmhead}$), the input embedding matrix also has an expansion ($E^{t}_{input}$). This is trivial if both matrices are shared but in case they are not, we also need to find initializations for the latter. Following \citet{hewitt2021initializing}, we can use the same approach to initialize $E^{t}_{input}$ as we do for $E^{t}_{lmhead}$.




\subsection{What is a `good' embedding initialization?}

As we are ensuring that the model parameters $\theta$ remain unchanged at the initialization step, we can safely say that for the same word sequence $w_{1:i-1}$, where each word in the sequence belongs to $\mathcal{V}^s$, the prefix $h_{i-1}$ at the output layer remains the same. Thus, the output word $w_i$ strictly depends on the embeddings of the new words added to the vocabulary, as they determine the new partition function and the output probability distribution. The main goal of our analysis is to identify the set of initializations of new words that give us the same output before and after expansion for the prefixes formed by the original tokens. In other words, for the same input word sequence $w_{1:i-1}$, where $w_k \in \mathcal{V}^s \ \forall \ k \in [i-1]$, if $w_i \ \text{and} \ w_i'$ represent the words predicted by language models $LM_{\theta}^s$ and $LM_{\theta'}^t$ respectively, i.e., $w_i = \operatorname{argmax}_{j \in \mathcal{V}^s} \ p_\theta\left(j \mid w_{1: i-1}\right)$ and $w_i' = \operatorname{argmax}_{j \in \mathcal{V}^s \cup \mathcal{V}^t} \ p_{\theta'}\left(j \mid w_{1: i-1}\right)$, we need $w_i = w_i'$. Let $e_1^t, e_2^t, \hdots, e_{n'}^t \in \mathbb{R}^d$ be the embedding initializations for words in $\mathcal{V}^t$. Therefore, a \textit{good initialization} is an initialization $\left\{e^t_{j}; j \in \mathcal{V}^t \right\}$ that ensures, for any prefix $h_{i-1} \in \mathbb{R}^d$, the set of prefixes formed by word sequences from the source vocabulary, that is $'w_i = w_i'$.

\subsection{Theorems}

\begin{theorem}: \textbf{A good initialization preserves the pre-expansion behavior.}

Let $e_1^s, e_2^s, e_3 ^s, ..., e_n^s \in \mathbb{R}^d$ be the embeddings of words in $\mathcal{V}^s$. Let $e_1^t,e_2^t, \hdots , e_{n'}^t \in \mathbb{R}^d $ be the embedding initializations for words in $\mathcal{V}^t$. If
    \begin{align} \label{condition}
      \sup_{k \in \mathcal{V}^t} (h^Te_{k}^t) \leq \sup_{k \in \mathcal{V}^s}  (h^Te_k^s) 
    \end{align}
    holds for all h $\in$ $\mathbb{R}^d$, then $\left\{e^t_{j}; j \in \mathcal{V}^t \right\}$ is a `good' initialization. 
     \label{theorem1}
\end{theorem}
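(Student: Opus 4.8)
The plan is to unwind the definition of a `good' initialization and show that condition \eqref{condition} forces the argmax over the expanded vocabulary to coincide with the argmax over the source vocabulary. Fix any prefix representation $h = h_{i-1} \in \mathbb{R}^d$ that is realizable by a source word sequence $w_{1:i-1}$ with $w_k \in \mathcal{V}^s$. Let $w_i = \operatorname{argmax}_{j \in \mathcal{V}^s} p_\theta(j \mid w_{1:i-1})$. Since both softmaxes share the same logits $h^\top e_j^s$ for $j \in \mathcal{V}^s$ and differ only in the normalization constant (which is a positive scalar independent of $j$), and since $\exp$ is strictly increasing, we have $w_i = \operatorname{argmax}_{j \in \mathcal{V}^s} (h^\top e_j^s)$ and likewise $w_i' = \operatorname{argmax}_{j \in \mathcal{V}^s \cup \mathcal{V}^t} (h^\top e_j^t)$, where by a slight abuse I write $e_j^t = e_j^s$ for $j \in \mathcal{V}^s$. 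So the entire claim reduces to a statement purely about the logits: the maximizing index over $\mathcal{V}^s \cup \mathcal{V}^t$ lies in $\mathcal{V}^s$.

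The key step is then immediate from \eqref{condition}: for every new token $k \in \mathcal{V}^t$ we have $h^\top e_k^t \leq \sup_{k \in \mathcal{V}^t}(h^\top e_k^t) \leq \sup_{j \in \mathcal{V}^s}(h^\top e_j^s) = h^\top e_{w_i}^s$, where the last equality uses that the supremum over the finite set $\mathcal{V}^s$ is attained at $w_i$. Hence no new token has a logit exceeding that of $w_i$, so $w_i$ remains a maximizer over the expanded vocabulary, i.e.\ we may take $w_i' = w_i$, which is exactly the condition defining a `good' initialization. I would also remark briefly on the tie-breaking convention: if some new token attains logit exactly equal to $h^\top e_{w_i}^s$, then $\operatorname{argmax}$ could in principle select it; to be safe one either assumes a fixed deterministic tie-break that prefers source tokens, or notes that the inequality in \eqref{condition} together with strictness wherever the original argmax is unique suffices. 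For greedy generation this boundary case is measure-zero and typically ignored, but it is worth a sentence.

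The main obstacle, such as it is, is conceptual rather than technical: one must be careful that the quantifier in the theorem statement is over \emph{all} $h \in \mathbb{R}^d$, which is strictly stronger than quantifying only over attainable prefixes $h_{i-1}$, but this only makes the hypothesis stronger and the conclusion easier, so it causes no trouble — indeed it is why the converse direction (which the theorem does not claim) would require more care. I would keep the proof to a few lines: reduce softmax-argmax to logit-argmax by monotonicity of $\exp$ and cancellation of the shared partition function, then apply \eqref{condition} directly. A closing sentence would connect this to the next result (initializing inside the convex hull of $\{e_j^s\}$ automatically satisfies \eqref{condition}, since $h^\top(\sum_j \lambda_j e_j^s) = \sum_j \lambda_j (h^\top e_j^s) \leq \max_j (h^\top e_j^s)$ for any convex combination), motivating the convex-hull construction used by CW2V.
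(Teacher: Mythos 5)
Your proof is correct and follows essentially the same route as the paper's: cancel the shared partition function, use monotonicity of $\exp$ to reduce the argmax comparison to logits, and apply condition~\eqref{condition} to conclude the greedy output stays in $\mathcal{V}^s$ and hence equals the pre-expansion prediction. Your explicit remark on the tie-breaking convention at equality is a point the paper's proof glosses over, but it does not change the argument.
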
 

\begin{proof}
    Let $h = h_{i-1} \in \mathbb{R}^d$ be a prefix formed by a word sequence $w_{1:i-1}$, where $w_k \in \mathcal{V}^s \ \forall \ k \in [i-1] \ $. As condition \ref{condition} holds for all $h \in \mathbb{R}^d$, we can say that, 
    \begin{align*}
        \sup_{k \in \mathcal{V}^t} (h^Te_{k}^t) \leq& \sup_{k \in \mathcal{V}^s}   (h^Te_k^s) \label{eq:1}\\
        \implies \sup_{k \in \mathcal{V}^t} \exp(h^Te_{k}^t) \leq & 
        \sup_{k \in \mathcal{V}^s}   \exp(h^Te_k^s) \\
        \implies \sup_{k \in \mathcal{V}^t} \frac{\exp(h^Te_{k}^t)}{Z'} &\leq
        \sup_{k \in \mathcal{V}^s}  \frac{ \exp(h^Te_k^s)}{Z'}
    \end{align*}

    where, $Z' = \sum_{j \in \mathcal{V}^s \cup \mathcal{V}^t} \exp \left(h^{\top} e_{j}^t\right)$

    is the new partition function, which is a positive constant as prefix and all the embeddings are given. We know that, $ \frac{\exp(h^Te_{k}^t)}{Z'}$ represents the probability of occurrence of word corresponding to the embedding $e_{k}^t$ at time step $i$. Thus, the inequality just says that probability of occurrence of any word from target vocabulary $\mathcal{V}^t$ is less than or equal to probability of occurrence of a word from source vocabulary. As the decoding at output layer is greedy, the output word is going to come from source vocabulary. We can guarantee that it remains the same as pre-expansion model's output word because the prefix remains the same before and after expansion as $w_k \in \ \text{source vocabulary} \ \mathcal{V}^s \ \forall \ k \in [i-1] \ $. Hence, as $w_i = w_i'$ and the embedding initialization $\left\{e^t_{j}; j \in \mathcal{V}^t \right\}$  is `good'. 
\end{proof}

\begin{theorem}: \textbf{An initialization in the convex hull of source embeddings is good.}   \label{theorem2}

If $y \in \mathcal{S}$, where $\mathcal{S}$ is the convex hull of the embeddings $e_1^s, e_2^s, e_3^s, \ldots, e_n^s$, then $(h^Ty) \leq \sup_{k \in \mathcal{V}^s} (h^Te_k^s)$ for all $h \in \mathbb{R}^d$. Moreover, if $e_i^t \in \mathcal{S}$ for all $i \in \mathcal{V}^t$, then the initialization is `good'.
\end{theorem}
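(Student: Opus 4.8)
The plan is to prove the two assertions in sequence, the first being a one-line consequence of the definition of the convex hull and the second being a direct application of Theorem~\ref{theorem1}. For the first part, I would start by writing an arbitrary point $y \in \mathcal{S}$ as a convex combination $y = \sum_{k \in \mathcal{V}^s} \lambda_k e_k^s$ with $\lambda_k \geq 0$ and $\sum_{k} \lambda_k = 1$, which is possible precisely because $\mathcal{S}$ is the convex hull of the finitely many points $e_1^s, \ldots, e_n^s$. Then for any fixed $h \in \mathbb{R}^d$, linearity of the inner product gives $h^T y = \sum_k \lambda_k (h^T e_k^s)$, and bounding each $h^T e_k^s$ by $\sup_{j \in \mathcal{V}^s} (h^T e_j^s)$ (a genuine maximum, since $\mathcal{V}^s$ is finite) yields $h^T y \leq \big(\sum_k \lambda_k\big)\sup_{j} (h^T e_j^s) = \sup_{j} (h^T e_j^s)$, using $\sum_k \lambda_k = 1$.

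For the second part, I would invoke the first part pointwise: if $e_i^t \in \mathcal{S}$ for every $i \in \mathcal{V}^t$, then for each such $i$ and every $h$ we have $h^T e_i^t \leq \sup_{k \in \mathcal{V}^s}(h^T e_k^s)$. Taking the supremum over $i \in \mathcal{V}^t$ on the left (again a finite set, so the bound is preserved) gives exactly $\sup_{k \in \mathcal{V}^t}(h^T e_k^t) \leq \sup_{k \in \mathcal{V}^s}(h^T e_k^s)$ for all $h \in \mathbb{R}^d$, which is condition~\eqref{condition}. Theorem~\ref{theorem1} then immediately implies that $\{e_j^t; j \in \mathcal{V}^t\}$ is a good initialization.

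There is essentially no hard step here; the entire argument is the observation that a linear functional attains its maximum over a polytope at a vertex, combined with the already-established Theorem~\ref{theorem1}. The only points requiring a modicum of care are making sure the suprema are over finite index sets (so they are attained and the convex-combination bound goes through cleanly) and that the convex combination is genuinely finite, which both hold by construction of $\mathcal{V}^s$ and $\mathcal{V}^t$. If anything, the "obstacle" is purely expository: stating clearly that the convex-hull membership is what licenses writing $y$ as a convex combination, and that the chain of inequalities needs no continuity or compactness beyond finiteness.
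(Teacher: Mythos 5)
Your proof is correct and follows essentially the same route as the paper: write the point in the convex hull as a convex combination, use linearity and the weights summing to one to bound $h^T y$ by $\sup_{k \in \mathcal{V}^s}(h^T e_k^s)$, then apply this to every $e_i^t$, take the supremum over $\mathcal{V}^t$, and invoke Theorem~\ref{theorem1}. Your version is in fact slightly more careful than the paper's (which has a typo using $w_j$ for the convex weights), so no changes are needed.
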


\begin{proof}
    Given $y \in \mathcal{S}$. Thus $y$ can be written as $y = \sum_{j \in \mathcal{V}^s} \alpha_j e_j^s$ where $\sum_{j \in \mathcal{V}^s} \alpha_j = 1$ and $0 \leq w_j \leq 1 \ \forall  \ j \in \mathcal{V}^s $. Thus, $\forall \ h \in \mathbb{R}^d,$
    
    \[ h^Ty = \sum_{j \in \mathcal{V}^s} \alpha_j h^Te_j^s \].

    

As $0 \leq \alpha_j \leq 1 \ \forall  \ j \in \mathcal{V}^s $,

\[
(h^Ty) \leq \sup_{k \in \mathcal{V}^s} (h^Te_k^s)
\]
 Given $e_i^t \in \mathcal{S} \ \ \forall \ i \in \mathcal{V}^t$
    \begin{align*}
        \implies (h^Te_i^t) \leq \sup_{k \in \mathcal{V}^s}& (h^Te_k^s)   \ \ \forall \ i \in \mathcal{V}^t  \ \ \forall \ h \in \mathbb{R} ^d \\
        \implies  \sup_{k \in \mathcal{V}^t} (h^Te_{k}^t) &\leq \sup_{k \in \mathcal{V}^s}  (h^Te_k^s) \ \ \forall \ h \in \mathbb{R} ^d \\
    \end{align*}

Thus, from theorem \ref{theorem1} we can say that if $e_i^t \in \mathcal{S} \ \ \forall \ i \in \mathcal{V}^t$, then the initialization is \textit{good}.
\end{proof}

We have showed that as long as we initialize every target embedding vector as a weighted average of source embeddings, the model output remains the same for the same prefix as long as it is obtained from a word sequence formed only by source vocabulary, thereby making it \textit{good}.  Table \ref{fig:model_outputs} verifies this empirically. 
In Appendix~\ref{converse} we provide some additional theoretical analysis where we show a weaker converse of Theorem \ref{theorem2}, that any \textit{strongly good} initialization lies in the convex hull of source embeddings.





\subsection{Our Approach: Constrained Word2Vec}
Having established that a initializing in the convex hull of existing embeddings is \textit{good}, we now propose \textit{Constrained Word2Vec (CW2V)}, a novel approach to learn these initializations. Specifically, we constrain $E^t$ as $WE^s$ where $\sum_{j \in \mathcal{V}^s} W_{ij} = 1 \  \forall \  i \in \mathcal{V}^t$ and $W_{ij} \geq 0 \ \forall \  j \in \mathcal{V}^s ,  i \in \mathcal{V}^t$. Here, $E^s \in \mathbb{R}^{(|\mathcal{V}^s|, d)}$ is the source embedding matrix,  $E^t \in \mathbb{R}^{(|\mathcal{V}^t|, d)}$ is the target embedding matrix and $W \in \mathbb{R}^{(|\mathcal{V}^t|, |\mathcal{V}^s|)}$ is the weight matrix that transforms $E^s$ to $E^t$ while ensuring the target embedding vectors reside inside the convex hull of the source embedding vectors. Our goal is to learn $W$.

Let $\mathcal{E}^t$ be the post-expansion embedding matrix of size $(|\mathcal{V}^s \cup \mathcal{V}^t|,d)$. In other words, $\mathcal{E}^t = [E^s; WE^s]$ where $;$ indicates concatenation along the vocabulary axis. By using $\mathcal{E}^t$ as the embedding matrix with $W$ as the only learnable parameters, we propose a mechanism similar to Skip-gram \cite{mikolov2013distributed} to obtain $\mathcal{E}^t$. In many modern PLMs, such as LLaMA, the input and output embedding layers are not tied, necessitating separate weight matrices for the input embedding and the LM head defined as $\mathcal{E}_{input}^s$ = [$E_{input}^s; softmax(W_{input})E_{input}^s$], $\mathcal{E}_{LM-head}^s$ = [$E_{LM-head}^s; softmax(W_{LM-head})E_{LM-head}^s$]$^T$ with sizes $(|\mathcal{V}^s \cup \mathcal{V}^t|, d)$, $( d, |\mathcal{V}^s \cup \mathcal{V}^t|)$, respectively. The \textit{softmax} operation ensures that the weights in each row add up to 1, thus assuring that the target embedding vectors remain in the convex hull of pre-expansion embeddings. 




We set these embedding matrices $\mathcal{E}^t_{input}$ and $\mathcal{E}^t_{LM-head}$ up in the traditional Skip-gram architecture \cite{mikolov2013distributed} as the word and context representation matrices. Similar to OFA \cite{liu2024ofa}, in order to make the learning computationally more efficient, we can also factorise $W_{input}$ and $W_{LM-head}$ and learn the resulting parameters. This methodology can be extended to any PLM. If both the embedding layers are tied for a PLM (RoBERTa), we still learn two weight matrices and choose either for initializing $E^t$.  To align target language embeddings with English, we trained the CW2V model on monolingual data from all languages and bilingual English-to-target dictionaries.

\section{Experimental Setting}
We now describe the models we focus on, the languages, downstream tasks and datasets, and implementation details.


\subsection{Models}
We use RoBERTa  \cite{DBLP:journals/corr/abs-1907-11692}, an encoder based architecture and LLaMA2-7B \cite{touvron2023llama}, a decoder based model and employ these models as the source models for our multilingual vocabulary expansion experiments.

\subsection{Tokenizers} \label{tokenizer}

We use the RoBERTa tokenizer as the source tokenizer for experiments with RoBERTa and the LLaMA2 tokenizer as the source tokenizer for experiments with LLaMA2. Since we are focusing on multilingual transfer, we train a SentencePiece \cite{DBLP:journals/corr/abs-1808-06226} tokenizer using textual data in target languages (German, Russian, Hindi, Tamil) and merge the obtained tokenizer with LLaMA2's tokenizer. The resulting tokenizer has 57K subwords in its vocabulary, and this merged tokenizer serves as the unified target tokenizer for all of our experiments, even for experiments with RoBERTa. We identify common subwords using a `fuzzy' search similar to FOCUS \cite{Dobler_2023} and OFA \cite{liu2024ofa}. We report the fertility score of the target tokenizer in all four target languages in Appendix \ref{sec:a-fertility}. Vocabulary expansion significantly reduces the fertilities for the languages considered.

\subsection{Datasets and Languages}  \label{data}
We extended the source model (English) to four target languages: Hindi, Tamil, Russian, and German. For all training, the Hindi and Tamil datasets were sourced from SANGRAHA \cite{khan2024indicllmsuite}, while the Russian, German, and English datasets were sourced from OSCAR \cite{ortiz-suarez-etal-2020-monolingual}. To train the multilingual tokenizer, we used a monolingual dataset of 3 million sentences per target language, sourced from the tokenizer training data used in IndicTrans2 \cite{gala2023indictrans2highqualityaccessiblemachine}. For the constrained word2vec model training, we used a monolingual corpus of 2 million tokens per target language. Additionally, we incorporated bilingual dictionary datasets: Hindi and Tamil from \cite{kanojia-etal-2018-indian}, German from url \footnote{\url{https://nlp.stanford.edu/projects/nmt/data/wmt14.en-de/dict.en-de}} processed by \cite{bojar-EtAl:2014:W14-33}, and Russian from url \footnote{\url{https://github.com/Badestrand/russian-dictionary}}.
Each expanded and initialized model underwent further pre-training on a multilingual dataset of 2.5 billion tokens, combining 500 million tokens per target language and 500 million English tokens.

\subsection{Baselines} \label{baselines}


\noindent\textbf{OFA} The One For All (OFA) Framework (Liu et al., 2024) \cite{liu2024ofa} uses multilingual static word vectors to inject alignment knowledge into the new subword embeddings. Regardless of the factorization approach, OFA initializes all new target embeddings using a weighted average of the source vocabulary embeddings, making OFA a `strongly good' initialization.

\noindent\textbf{Univariate} Each target embedding is initialised by drawing values from 1-D Gaussian distributions parameterized by the mean and standard deviation of the source embeddings for each dimension. This was the primary baseline considered by OFA \cite{liu2024ofa}. 

\noindent\textbf{Multivariate} Every target embedding is sampled from the multivariate gaussian distribution of embeddings whose mean and covariance come from the original embeddings $E^s$.

\noindent\textbf{Mean} Every target embedding is the average of pre-expansion embeddings. Mean initialization is used to initialize target vocabulary in FVT \cite{gee-etal-2022-fast} and \citet{hewitt2021initializing}. This is a `strongly' good initialization as mean of original embeddings belongs to the convex hull of original embeddings.

\noindent\textbf{Random} Every target embedding is randomly sampled from the $d-$dimensional guassian distribution $\mathcal{N}(0,0.02I)$ where $I$ is a $d-$dimensional identity matrix.

\subsection{Constrained Word2Vec Training}

We trained the constrained word2vec model using a similar setup to skip-gram \cite{mikolov2013distributed} training. The context window size was set to 10 and negative sampling to 5. Additionally, we factorized the $W_{input}$ and $W_{LM-head}$ matrices, with a factorized dimension of 1024. This factorization was done to reduce the number of trainable parameters, similar to OFA (\cite{liu2024ofa}). 
Factorizing the weight matrices in the constrained word2vec model for RoBERTa reduced the number of trainable parameters from 758M to 59M, and for LLaMA2, it reduced from 1660M to 118M.

\begin{table}[h]
\centering
\small

\resizebox{\columnwidth}{!}{
\begin{tabular}{llcc}
\toprule
\textbf{Model} & \textbf{Task Category} & \textbf{Task} & \textbf{Metric} \\
\midrule
\multirow{3}{*}{\begin{tabular}[c]{@{}l@{}}\textbf{RoBERTa} \end{tabular}} & \begin{tabular}[c]{@{}l@{}}Sentence Classification\end{tabular} & XNLI &  Acc. \\
\cmidrule(lr){2-4}
 & Question Answering & QA & F1 \\
 \cmidrule(lr){2-4}
 & Token Classification & NER & F1 \\
 \midrule
\multirow{3}{*}{\begin{tabular}[c]{@{}l@{}}\textbf{LLaMA2} \end{tabular}} & \begin{tabular}[c]{@{}l@{}}Sentence Classification\end{tabular} & XNLI &  Acc. \\
\cmidrule(lr){2-4}
 & Machine Translation & FLORES & CHRF \\
\cmidrule(lr){2-4}
 & Question Answering & QA & F1 \\
 \cmidrule(lr){2-4}
 & Sentence Summarisation & XLSUM & BLEURT \\
\midrule
\end{tabular}
}
\caption{A summary of the tasks, datasets and metrics.}
\label{table:tasks}
\end{table}
\subsection{Downstream Tasks} \label{downstream}

\begin{table*}[t]
    \footnotesize
    \centering
    \begin{tabular}{lrrrrrr|rrrrrrrr}
    \toprule
          & \multicolumn{6}{c}{RoBERTa} & \multicolumn{8}{c}{LLaMA2} \\
          \cmidrule(lr){2-7} \cmidrule(lr){8-15}
          & \multicolumn{2}{c}{XNLI} & \multicolumn{2}{c}{NER} & \multicolumn{2}{c}{QA} & \multicolumn{2}{c}{MT} & \multicolumn{2}{c}{XNLI} & \multicolumn{2}{c}{QA} & \multicolumn{2}{c}{XLSUM}  \\
          \cmidrule(lr){2-3} \cmidrule(lr){4-5} \cmidrule(lr){6-7} \cmidrule(lr){8-9} \cmidrule(lr){10-11} \cmidrule(lr){12-13} \cmidrule(lr){14-15}
        ~ & en & avg & en & avg & en & avg & En-X & X-En & en & avg  & en & avg & en & avg\\ 
        \midrule
        CW2V & \textbf{86.0} & 36.0 & \textbf{82.2} & 21.5 &  \textbf{90.7}  & 9.0 & \textbf{17.0}  &  \textbf{27.3}  & 60.4  & \textbf{38.1}  & \textbf{77.7} & \textbf{35.8} & \textbf{0.6} & \textbf{0.4}\\ 
        OFA & 85.6 & \textbf{37.7} & 81.9 & \textbf{21.7} & 90.6  & \textbf{12.0} & 11.2  & 16.2 & 60.4 & 37.1 & 76.0 & 26.0 & 0.6 & 0.3\\ 
        Multivariate & 85.7 & 35.7 & 81.8 & 18.3 & 90.4 & 9.5 & 11.1 & 16.1 & 60.4 & 37.2 & 77.5 & 28.7 & 0.5 & 0.2\\ 
        Univariate & 85.6 & 36.6 & 82.0 & 22.0 & 90.7 & 10.3 & 11.1 & 16.0 & 60.4 & 37.2 & 77.4 & 28.7 & 0.5 & 0.3\\ 
        Mean & 85.5 & 36.0 & 81.5 & 20.3 & 90.5 &  8.8 & 11.1 & 16.2 & \textbf{60.5} & 37.2 & 77.4 & 28.7 & 0.5 & 0.3\\
        Random & 85.8 & 35.9  & 81.6 &  21.0 & 90.3 & 9.6 & 0.0 & 0.0 & 33.3 & 33.3 & 0.0 & 0.0 & 0.0 & 0.0 \\
               \bottomrule
    \end{tabular}
    \caption{Performance of the expanded RoBERTa and LLaMA2 models initialized with Constrained Word2Vec and baselines on downstream tasks across 5 languages. }
    \label{tab:results}
\end{table*}

We evaluated RoBERTa and LLaMA on various tasks, as shown in Table~\ref{table:tasks}. For XNLI, we used XNLI \cite{conneau-etal-2018-xnli} for German, Russian, Hindi, and English, and IndicXNLI \cite{aggarwal-etal-2022-indicxnli} for Tamil. For NER, we used WikiANN \cite{pan-etal-2017-cross}. For QA, we used SQuAD \cite{rajpurkar2018know} for German, Russian, Hindi, and English, and IndicQA \cite{doddapaneni-etal-2023-towards} for Tamil. For Machine Translation, we used FLORES \cite{nllbteam2022language}. RoBERTa MLM checkpoints were fine-tuned on English and evaluated zero-shot on target languages. LLaMA CLM checkpoints were evaluated with 4-shot prompting. The metrics for each task are also listed in Table~\ref{table:tasks}.

\section{Results}

We now describe the results of our investigation, where we first evaluate different initialization methods without continual pre-training or fine-tuning for RoBERTa and LLaMA2. We follow this up with results for continual pre-training and fine-tuning for RoBERTa, and continual pre-training and few-shot prompting for LLaMA2.

\subsection{Impact of Initialization Methods}
\noindent\textbf{For the encoder-only RoBERTa model:} Table \ref{tab:results} presents the performance of the expanded RoBERTa model initialized with Constrained Word2Vec, alongside baseline models, across three downstream tasks: XNLI, NER, and QA. The expanded and initialized model was not continually pre-trained but was fine-tuned till convergence on downstream task data. Firstly, looking at the columns labeled \textbf{en}, we can see that CW2V is better than any baseline for English, even OFA, indicating that it preserves the pre-expansion behavior of RoBERTa better than any other methods. Next, the scores under the \textbf{avg} columns indicate that CW2V is competitive with other approaches, especially OFA but tends to be slightly inferior. This means that CW2V mildly sacrifices the performance on other languages while strongly preserving the English performance.

\noindent\textbf{For the decoder-only LLaMA2 model:} Table \ref{tab:results} shows the performance of the expanded LLaMA2 model initialized with Constrained Word2Vec, alongside baselines, on the following downstream tasks: XNLI, Machine Translation, QA and XLSUM (summarization). Here as well, the expanded and initialized model was not continually pre-trained but was evaluated using few-shot prompting. Different from the case of RoBERTa, the CW2V model significantly outperforms the OFA model across all tasks and languages despite not being continually pre-trained. CW2V achieves higher CHRF scores, averaged over all translation directions, in MT (17.02 En-X and 27.26 X-En) compared to OFA's 11.17 and 16.17, respectively. Similarly, for XNLI, QA and XLSUM, we observe that the average (\textbf{avg} column) performance over all languages for CW2V is vastly better than any other approach. The English-only performance (\textbf{en} column) however is comparable across all approaches with CW2V being only slightly better. This proves that in decoder-only models while CW2V is as good as any other approach for preserving the pre-expansion English-only performance, it is substantially better than other approaches for the new languages via vocabulary expansion. 

\begin{figure*}[!h]
    \centering
    \includegraphics[scale=0.35]{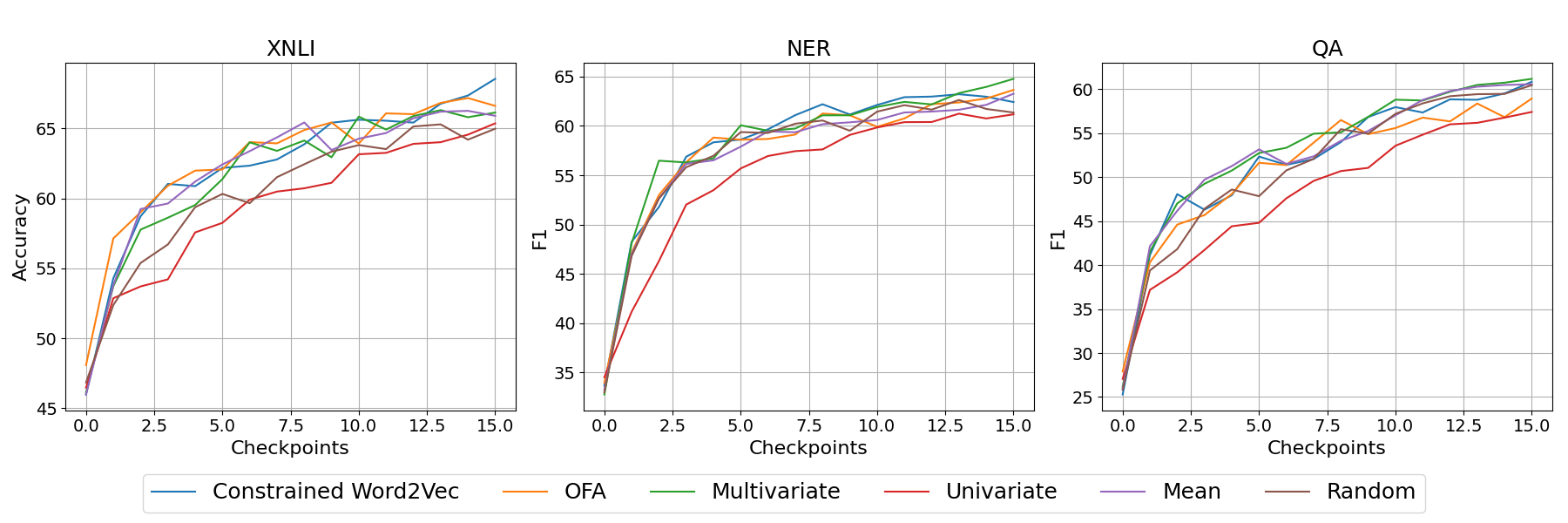}
    \caption{Evaluation of different initialization methods on expanded RoBERTa models using three multilingual tasks (XNLI, NER, QA) at 15 CPT checkpoints. The plots show average performance across five languages.}
    \label{fig_roberta_CPT}
\end{figure*}

\begin{figure*}[!h]
    \centering
    \includegraphics[scale=0.36]{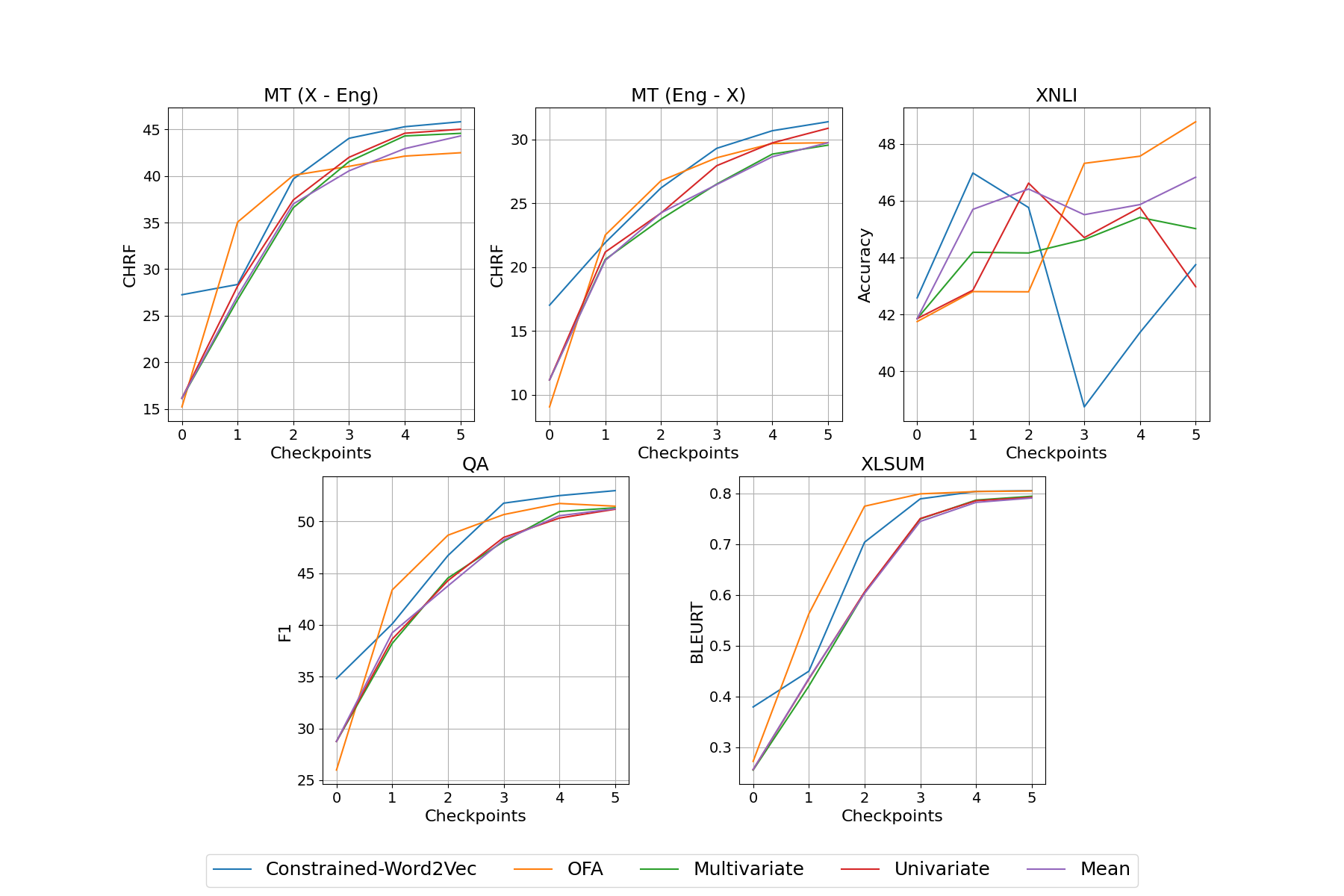}
    \caption{4-shot XNLI, MT, QA, XLSUM evaluation of different initialization methods on expanded LLaMA2 models at 5 equidistant CPT checkpoints. MT plots show average performance across 4 languages, and XNLI, QA, XLSUM plots show average performance across 5 languages.}
    \label{fig_llama_avg}
\end{figure*}

\begin{figure*}[!h]
    \centering
    \includegraphics[scale=0.35]{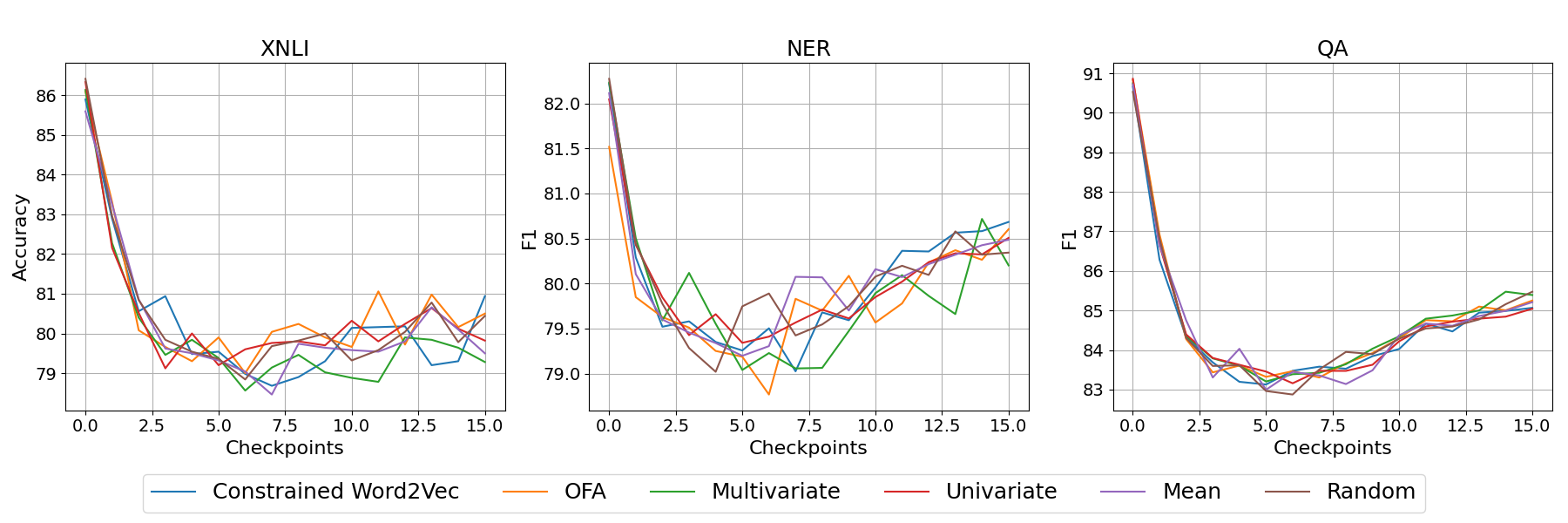}
    \caption{Assessment of English performance for various initialization methods on expanded RoBERTa models across three downstream tasks (XNLI, NER, QA) at 15 CPT checkpoints.} 
    \label{fig_roberta_en}
\end{figure*}

\subsection{Impact of Continual Pretraining}
Here we show the compounding effects of continual pre-training  and various initialization strategies to understand whether initialization matters or not when monolingual adaptation data exists.

\noindent\textbf{For the encoder-only RoBERTa model:} We evaluate the performance of expanded RoBERTa models initialized with Constrained Word2Vec (CW2V) and other baseline methods with CPT. We evaluate 15 checkpoints from one epoch of CPT (plus the initial checkpoint prior to CPT) on 3 downstream tasks. The results are depicted in Figure \ref{fig_roberta_CPT}. Here, again, CW2V demonstrates comparable or superior performance to OFA, especially towards the latter stages of CPT. As illustrated in Figure \ref{fig_roberta_CPT}, CW2V quickly converges with OFA (within less than 4 checkpoints) across all three tasks. Additionally, simpler baselines such as mean and multivariate also achieve comparable performance to OFA and CW2V shortly thereafter (in NER and QA, Multivariate catches up to CW2V within two checkpoints), demonstrating strong performance. This suggests that straightforward baselines like multivariate can be as effective as sophisticated methods such as Constrained Word2Vec and OFA. Furthermore, our analysis consistently shows that Univariate and Random initialization methods underperform in comparison to CW2V, OFA, Multivariate\footnote{Multivariate initialization has a high probability of residing within the convex hull of the source embeddings (Appendix \ref{sec:a-multi})}, and Mean. This highlights that Univariate and Random methods, despite being used as primary baselines in previous work, are inadequate for comparison. 

\noindent\textbf{For the decoder-only LLaMA2 model:} Similarly, we observe the performance of the expanded LLaMA2 models initialized with Constrained Word2Vec and the baselines. We evaluate 5 checkpoints from one epoch of CPT (plus the initial checkpoint prior to CPT) on 4 downstream tasks. The results are depicted in Figure \ref{fig_llama_avg}. For MT and QA, both generative tasks, on average, CW2V is better if not comparable with OFA while being consistently better than all other approaches. We see that CW2V quickly surpasses OFA in 2-3 checkpoints. In the case of XLSUM, however, OFA tends to be better during intermediate checkpoints (1, 2, 3), but CW2V eventually performs just as well afterwards. Once again, CW2V (and OFA) are significantly better than other baselines. 

XNLI is the only confounding task since no clear trends can be observed over various CPT stages. Furthermore, all models perform almost equally poorly, indicating that neither vocabulary expansion nor CPT is sufficient to improve XNLI performance. We suppose that fine-tuning on an XNLI dataset may shed further light on this, but due to limited compute, we did not pursue fine-tuning for any task and hence leave it as future work. Overall, CW2V is a highly effective initialization strategy for CPT, particularly benefiting languages that we aim to support more effectively through vocabulary expansion.

\subsection{Catastrophic Forgetting in English tasks}
Here we reveal something concerning about the inevitable negative effect of CPT on the pre-expansion language (English). During continued pre-training on monolingual datasets in both target and source languages, even with the source language (English) constituting 20\% of the total dataset, we observed an initial drop in English performance. Figure \ref{fig_roberta_en} shows the performance of the expanded RoBERTa models at various CPT checkpoints on only English tasks. Initially, performance drops, after which it begins to improve with prolonged training without comprising performance on non-english tasks. This suggests that adjusting the model to learn new target language data temporarily disrupts the weights previously optimized for English but prolonged training could potentially further restore and enhance English performance.

\section{Conclusion}
In this work, we establish that effective embedding initialization for an expanded vocabulary in language models can be achieved within the convex hull of source vocabulary embeddings. We introduce a data-driven initialization method, \textit{Constrained Word2Vec (CW2V)}, which learns the target embeddings by constraining them in the convex hull of the source embeddings. Our comparison of various initialization methods reveals that Constrained Word2Vec performs on par with other advanced techniques. Additionally, we find that simple methods like Multivariate and Mean, which ensure new embeddings lie within the convex hull of source embeddings, perform comparably well to more complex approaches. This indicates that efficient large-scale multilingual continued pretraining can be possible even with simpler methods, provided they are \textit{good} initialization strategies.


\bibliography{anthology,custom}
\bibliographystyle{acl_natbib}





\newpage

\appendix

\section{Limitations}
In this work, we identify the following limitations:

\begin{itemize}
    \item Due to limited computational resources, we could not explore a variety of pre-trained models beyond RoBERTa and LLaMA2. However, since most language models function similarly, we expect our methods and findings to be generally applicable.
    \item For LLaMA2 models, we only conduct few-shot prompting for downstream task evaluation due to resource constraints. Nonetheless, based on our observations with RoBERTa, fine-tuning on downstream tasks will likely show that CW2V and OFA are only marginally better than other approaches.
    \item Although we evaluated only five downstream tasks, we cannot confirm that our observations will apply to all types of tasks. This remains an area for future research.
    \item We show experiments on four languages—Hindi, German, Russian, and Tamil—due to limited computational resources. However, as we have chosen languages from different scripts, we expect our methods and findings to be generally applicable.
\end{itemize}

\section{Further Analysis}\label{converse}
\begin{theorem}: \textbf{All strongly good initializations are in the convex hull.}\label{theorem3}
   
Let $e_1^s, e_2^s, e_3 ^s, ..., e_n^s \in \mathbb{R}^d$ be the embeddings of words in $\mathcal{V}^s$. Let $y \in \mathbb{R}^d$. If $(h^Ty) \leq \sup_{k \in \mathcal{V}^s} (h^Te_k^s)$ for all $h \in \mathbb{R}^d$, then $y \in \mathcal{S}$, where $\mathcal{S}$ is the convex hull of the embeddings $e_1^s, e_2^s, e_3 ^s, ..., e_n^s$.
\end{theorem}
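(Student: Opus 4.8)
The plan is to prove the contrapositive: if $y \notin \mathcal{S}$, then there exists some $h \in \mathbb{R}^d$ with $(h^T y) > \sup_{k \in \mathcal{V}^s} (h^T e_k^s)$. This is a textbook application of the separating hyperplane theorem. Since $\mathcal{S}$ is the convex hull of finitely many points $e_1^s, \ldots, e_n^s$, it is a compact convex set (a polytope). If $y \notin \mathcal{S}$, then $\{y\}$ and $\mathcal{S}$ are disjoint, with one compact and the other closed convex, so by the strict separating hyperplane theorem there is a vector $h \in \mathbb{R}^d$ and a scalar $c$ such that $h^T y > c$ and $h^T x < c$ for all $x \in \mathcal{S}$. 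In particular $h^T e_k^s < c < h^T y$ for every $k \in \mathcal{V}^s$, hence $\sup_{k \in \mathcal{V}^s} (h^T e_k^s) \le c < h^T y$, contradicting the hypothesis. This establishes $y \in \mathcal{S}$.

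The key steps, in order: (i) observe $\mathcal{S}$ is closed, bounded, and convex because it is the convex hull of a finite point set; (ii) invoke the separating hyperplane theorem for the disjoint pair $\{y\}$ and $\mathcal{S}$ to obtain a separating direction $h$; (iii) translate the separation inequality into a statement about the finite supremum $\sup_{k \in \mathcal{V}^s}(h^T e_k^s)$, using that the supremum over the generating vertices dominates the supremum over the whole hull; (iv) conclude by contraposition. One should be slightly careful with the direction of the inequality in the separation statement — I want the witness $h$ such that $h$ makes $y$ strictly larger than everything in $\mathcal{S}$, which is exactly what strict separation gives (possibly after negating $h$).

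The main obstacle, such as it is, is purely a matter of citing the right form of the separating hyperplane theorem: we need \emph{strict} separation, which requires one of the two sets to be compact (here $\{y\}$, or alternatively $\mathcal{S}$, since a polytope is compact) and the other closed and convex. Plain separation (non-strict) would only give $h^T y \ge \sup_k (h^T e_k^s)$, which is not enough to contradict the hypothesis $(h^T y) \le \sup_k (h^T e_k^s)$; we genuinely need the strict inequality. Everything else is routine. It is worth remarking explicitly, as the excerpt already hints with the word ``weaker,'' that this is a converse only to the ``strongly good'' condition (the pointwise inequality $(h^T y) \le \sup_{k} (h^T e_k^s)$ for all $h$), not to ``good'' as originally defined via preservation of $\operatorname{argmax}$ outputs; a genuinely good initialization need not be strongly good, so no full converse of Theorem~\ref{theorem2} is claimed.
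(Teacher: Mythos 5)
Your proposal is correct and follows essentially the same route as the paper: both apply the strict separating hyperplane theorem to $y$ and the compact convex set $\mathcal{S}$, obtain a direction $h$ (the paper's normal $\vec{n}$) along which $y$ dominates all source embeddings, and conclude by contradiction/contraposition. If anything, your explicit insistence on \emph{strict} separation is more careful than the paper's write-up, which states the separation inequalities non-strictly and thus only reaches $\sup_{k}(\vec{n}^{T}e_k^s) \leq \vec{n}^{T}y$, which does not literally contradict the hypothesis when equality holds.
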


\begin{proof}
    We prove this using contradiction. Say, $y \notin \mathcal{S}$ and 
$(h^Ty) \leq \sup_{k \in \mathcal{V}^s} (h^Te_k^s)$ holds good for all $h \in \mathbb{R}^d$. Since, $\mathcal{S}$ is closed and convex and $y \notin \mathcal{S}$ , there exists a hyperplane $\mathbb{H}$ that strictly separates y from $\mathcal{S}$. This hyperplane defines a half space $\mathcal{H}$ containing $\mathcal{S}$. Note that $\mathcal{H}$ contains $\mathcal{S}$ and $y \notin \mathcal{H}$

\vspace{5pt}
 Let $\vec{b} \in \mathbb{R}^d$ be a point on the hyperplane $\mathbb{H}$. Let $\vec{n} \in \mathbb{R}^d$ denote the normal to the hyperplane $\mathbb{H}$. We choose $\vec{n}$ in such a way that any point $\vec{r} \in \mathcal{S}$ satisfies,

\begin{align*}
    (\vec{r} - \vec{b})^T \vec{n} \leq 0
\end{align*}

Thus, any embedding $e^s \in \{e_1^s, e_2^s, ... , e_n^s\}$ satisfies,
\begin{align}\label{eq:2}
    \left(e^s - b \right)^T \vec{n} \leq 0
\end{align}

and any point $\vec{q} \notin \mathcal{H}$ satisfies,

\begin{align*}
    (\vec{q} - \vec{b})^T \vec{n} \geq 0
\end{align*}

As $y \notin \mathcal{H}$,
\begin{align}\label{eq:3}
    (y - \vec{b})^T \vec{n} \geq 0
\end{align}

Equations \ref{eq:2} and \ref{eq:3} imply, 

\begin{align}
    \vec{n}^Te^s \leq \vec{n}^Ty \ \  \forall \ \ e^s  \in \{e_1^s, e_2^s, ... , e_n^s\}
\end{align}

Thus, $\sup_{k \in \mathcal{V}^s} (\vec{n}^Te_k^s) \leq (\vec{n}^Ty) $ which contradicts the statement that $ (h^Ty) \leq  \sup_{k \in \mathcal{V}^s} (h^Te_k^s) $ holds good for all $h \in \mathbb{R}^d$ as it fails for $h = \vec{n}$.

Thus, if $(h^Ty) \leq \sup_{k \in \mathcal{V}^s} (h^Te_k^s)$ for all $h \in \mathbb{R}^d$, then $y \in \mathcal{S}$, where $\mathcal{S}$ is the convex hull of the embeddings $e_1^s, e_2^s, e_3 ^s, ..., e_n^s$.

\end{proof}

Thus, from theorem \ref{theorem3} we can say that any `strongly good' initialization must lie in the convex hull of pre-expansion embeddings. But for an initialization to be considered `good', the output word must remain unchanged for prefixes formed by word sequences from the source vocabulary. This implies that the condition \ref{condition} only needs to be satisfied for a subset of $\mathbb{R}^d$, rather than for all $h \in \mathbb{R}^d$. Thus, it is not necessary that the converse of Theorem \ref{theorem2} to be true as we can have initializations which are `good' but not `strongly good'. However, we can say that if an initialization is `strongly good', embeddings must lie in the convex hull of pre-expansion embeddings.

\section{Effect on Initialisation on Model Output} \label{sec:a-table}

\begin{figure}[h]
    \centering
    \includegraphics[scale = 0.4]{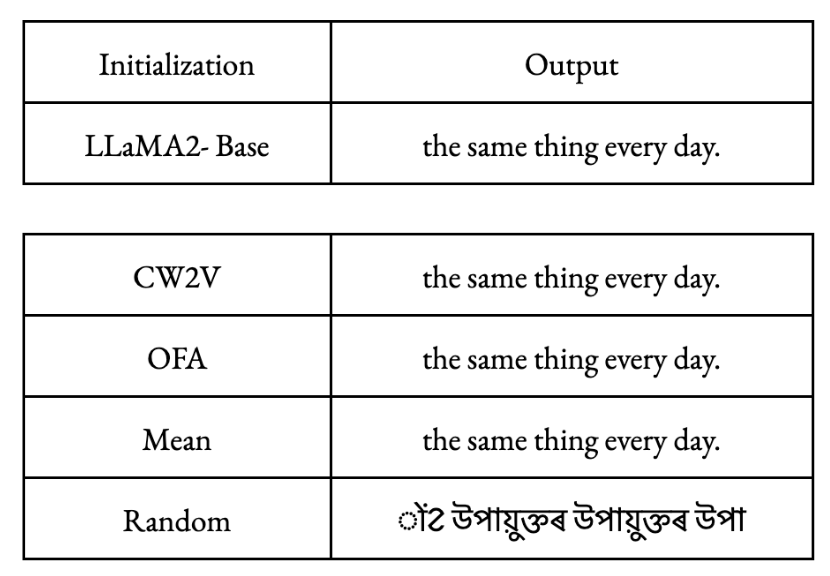}
    \caption{Expanded LLaMA2 Model Outputs for the Prompt : \textit{``I don't want to eat" for various initializations.}}
     \label{fig:model_outputs}
\end{figure}
Random initialization of new embeddings can result in a pre-trained language model assigning a probability of 1 to new words and can degrade domain adaptation performance \cite{hewitt2021initializing}. Figure \ref{fig:model_outputs} shows the outputs of expanded LLaMA2 models for an English sentence prompt. Random initialization of expanded tokens results in gibberish, while the other three methods produce outputs identical to the base LLaMA2 model, as they ensure embeddings lie within the convex hull of source embeddings.

\section{Fertility Score} \label{sec:a-fertility}
\begin{table}[!htbp]
    \centering
    \resizebox{\columnwidth}{!}{
    \begin{tabular}{|l|l|l|l|l|l|}
    \hline
        Fertility Score & English & Hindi & Tamil & Russian & German \\ \hline
        LLaMA2 Tokenizer & 2.89 & 7.47 & 12.66 & 4.25 & 3.88 \\ \hline
        RoBERTa Tokenizer & 2.87 & 10.85 & 28.80 & 9.89 & 4.42 \\ \hline
        Extended Tokenizer & \textbf{2.87} & \textbf{2.83} & \textbf{2.83} & \textbf{3.74} & \textbf{3.88} \\ \hline
    \end{tabular}
    }
    \caption{Fertility scores for the source and the extended tokenizers on all the languages}
    \label{table_fertility_score}
\end{table}
Table \ref{table_fertility_score} shows the fertility scores of the target tokenizer with respect to source tokenizer on 5 languages considered.




\section{Tokenizer Coverage}
\begin{table} [h] 
    \setlength{\belowcaptionskip}{-0.4cm}
    \footnotesize
    \centering
    \setlength{\tabcolsep}{1.2mm}{}
    \begin{tabular}{ll|l|l}
        \toprule
        & \multicolumn{3}{c}{Target Tokenizer}  \\
        \cmidrule(lr){2-4} 
        & Copied Tokens & Initialized Tokens & Coverage\\
        \midrule
        RoBERTa & 22K & 35K &  38.5 \% \\
        LLaMA2 & 32K& 25K &  56.14 \%\\
       
        
       \bottomrule
    \end{tabular}
    \caption{: The number of subwords being initialized
by copying from the original embeddings from RoBERTa's and LLaMA's tokenizers.  }
    \label{tokenizer_coverage}
\end{table}

Table \ref{tokenizer_coverage} shows the size of source vocabulary in experiments with RoBERTa and LLaMA2. As the new vocabulary is extended from LLaMA2, many subword
embeddings are directly copied when using LLaMA2 as
the source model. We employed a `fuzzy' search similar to FOCUS \cite{Dobler_2023} to identify the common tokens between the target tokenizer and the RoBERTa tokenizer. This led to a 38.5 \% coverage of tokens leading us to a source vocabulary of size 22K for experiments with RoBERTa.
 
\section{Do Multivariate and Univariate initializations reside in the hull?} \label{sec:a-multi}
In multivariate initialization, we sample from a multivariate Gaussian that considers correlations across dimensions, unlike the univariate distribution. When dealing with strongly correlated dimensions (positive or negative), a multivariate approach proves advantageous. By considering the correlations across dimensions, we can sample new embeddings that are positioned more effectively within the latent space of original embedding distribution. However, there is no straightforward method to determine if embedding sampled from either distribution lies within the hull. To ensure that multivariate initialization remains within the convex hull with a high confidence, we also scaled the covariance matrix by a factor of 1e-5. In contrast, unscaled univariate initialization was used as a baseline, aligning with previous studies \cite{liu2024ofa}. \cite{hewitt2021initializing} recommends employing multivariate initialization to incorporate noise. Notably, as illustrated in Figure 2, multivariate initialization significantly outperforms univariate initialization and closely approaches the performance of OFA in encoder-based models. However, a comprehensive theoretical analysis is required to determine if unscaled multivariate initialization has a higher likelihood of being within the convex hull compared to univariate initialization. This aspect is left for future research, given the empirical observation that univariate initializations typically exhibits lower performance compared to scaled multivariate initialization.

\section{Continued Pretraining Details}

All the expanded and initialized RoBERTa models are trained on the same hyperparameters used in OFA \cite{liu2024ofa}. Specifically, we employ the MLM objective with a standard mask rate of 15\%. We utilize the Adam optimizer \cite{kingma2017adammethodstochasticoptimization} with parameters $(\beta_1 = 0.9, \beta_2 = 0.999)$ and $\epsilon = 1 \times 10^{-6}$. The initial learning rate is set to $5 \times 10^{-5}$. The only deviation from our approach compared to OFA is the batch size, which is fixed at 32. Each batch consists of training samples concatenated up to the maximum sequence length of 512, randomly selected from all language-scripts described in Section \ref{data}. We continue to pretrain using the scripts adapted from HuggingFace\footnote{\url{https://github.com/huggingface/}}.

For LLaMa2, we used the standard LM objective with a context length of 2048 subwords. We used the Adam optimizer with linear warmup and decay where the peak learning rate was $5 \times 10^{-5}$ and warmup was done till 10\% of training steps. We trained for 1 epoch over our data saved checkpoints every 20\% of an epoch enabling us to study model behavior against increasing training data.

\section{Complete Results for Each Task and
Language}
Results for each task in all the languages across all the checkpoints is given in figures \ref{fig:enter-label1}, \ref{fig:enter-label2}, \ref{fig:enter-label3}, \ref{fig:enter-label4}, \ref{fig:enter-label5}, \ref{fig:enter-label6}, \ref{fig:enter-label7}

\begin{figure*}
    \centering
    \includegraphics[scale = 0.30]{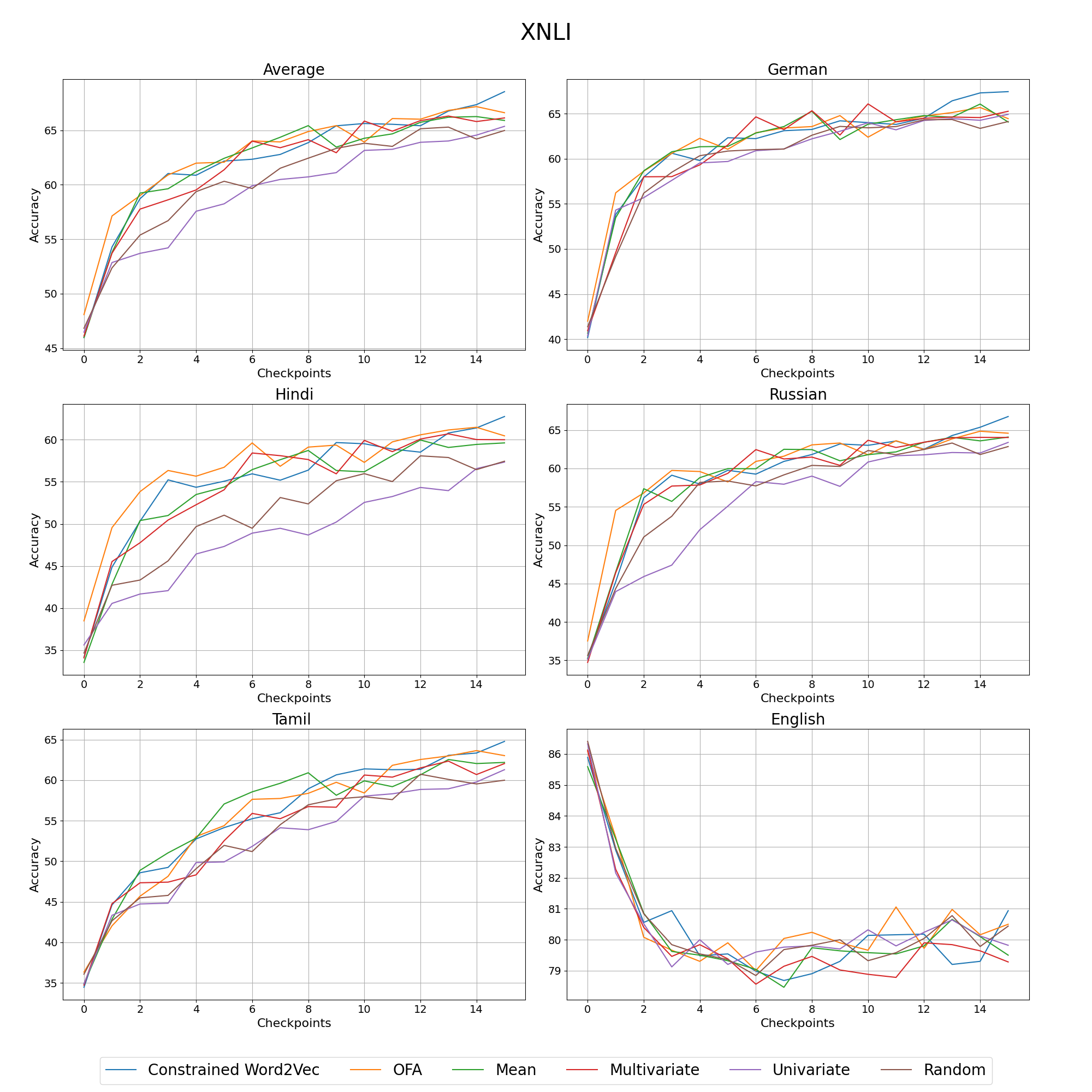}
    \caption{XNLI evaluation of expanded RoBERTa models}
    \label{fig:enter-label1}
\end{figure*}

\newpage
\begin{figure*}
    \includegraphics[scale = 0.30]{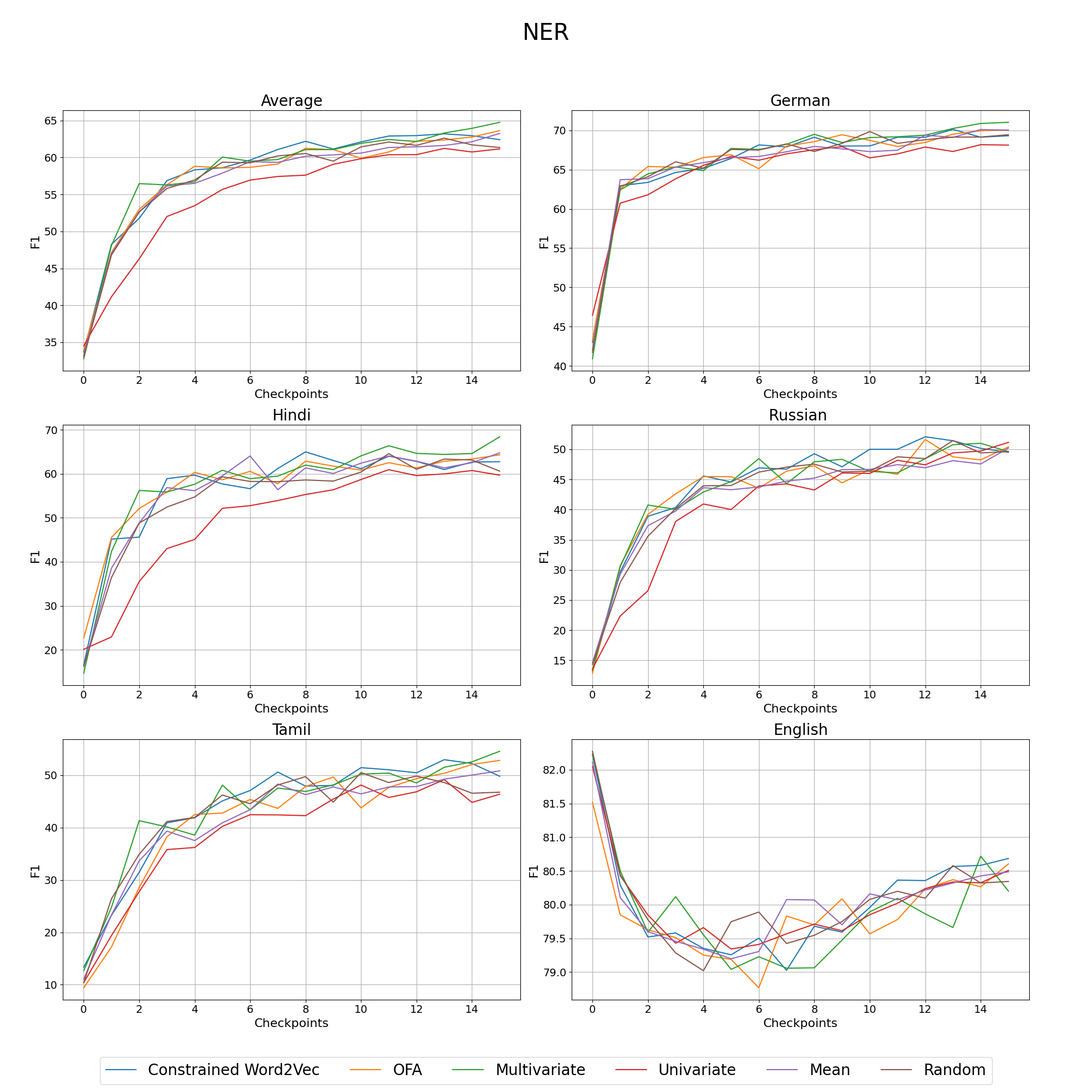}
    \caption{NER evaluation of expanded RoBERTa models}
    \label{fig:enter-label2}
\end{figure*}

\newpage
\begin{figure*}
    \includegraphics[scale = 0.30]{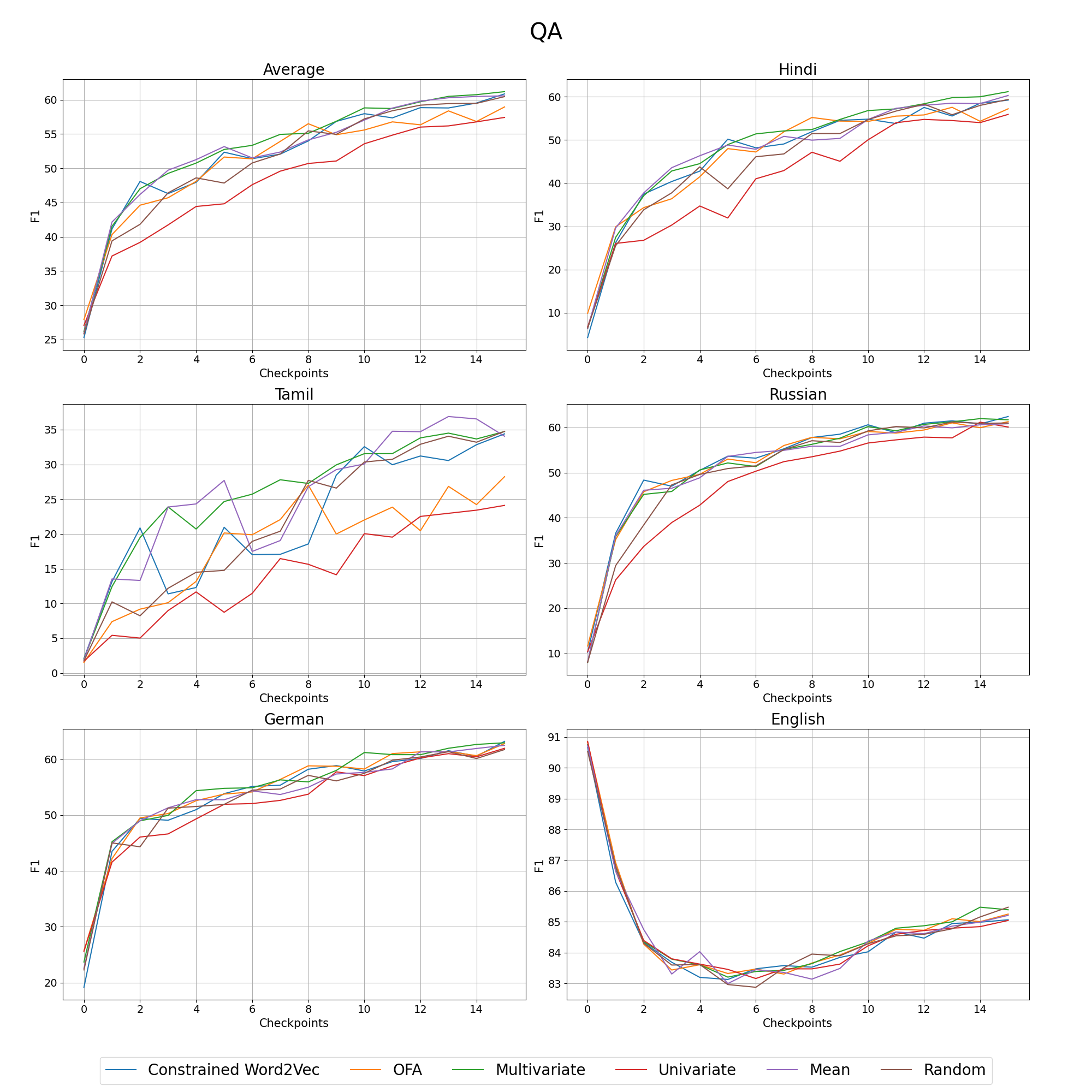}
    \caption{QA evaluation of expanded RoBERTa models}
    \label{fig:enter-label3}
\end{figure*}

\newpage

\begin{figure*}
    \centering
    \includegraphics[scale=0.21]{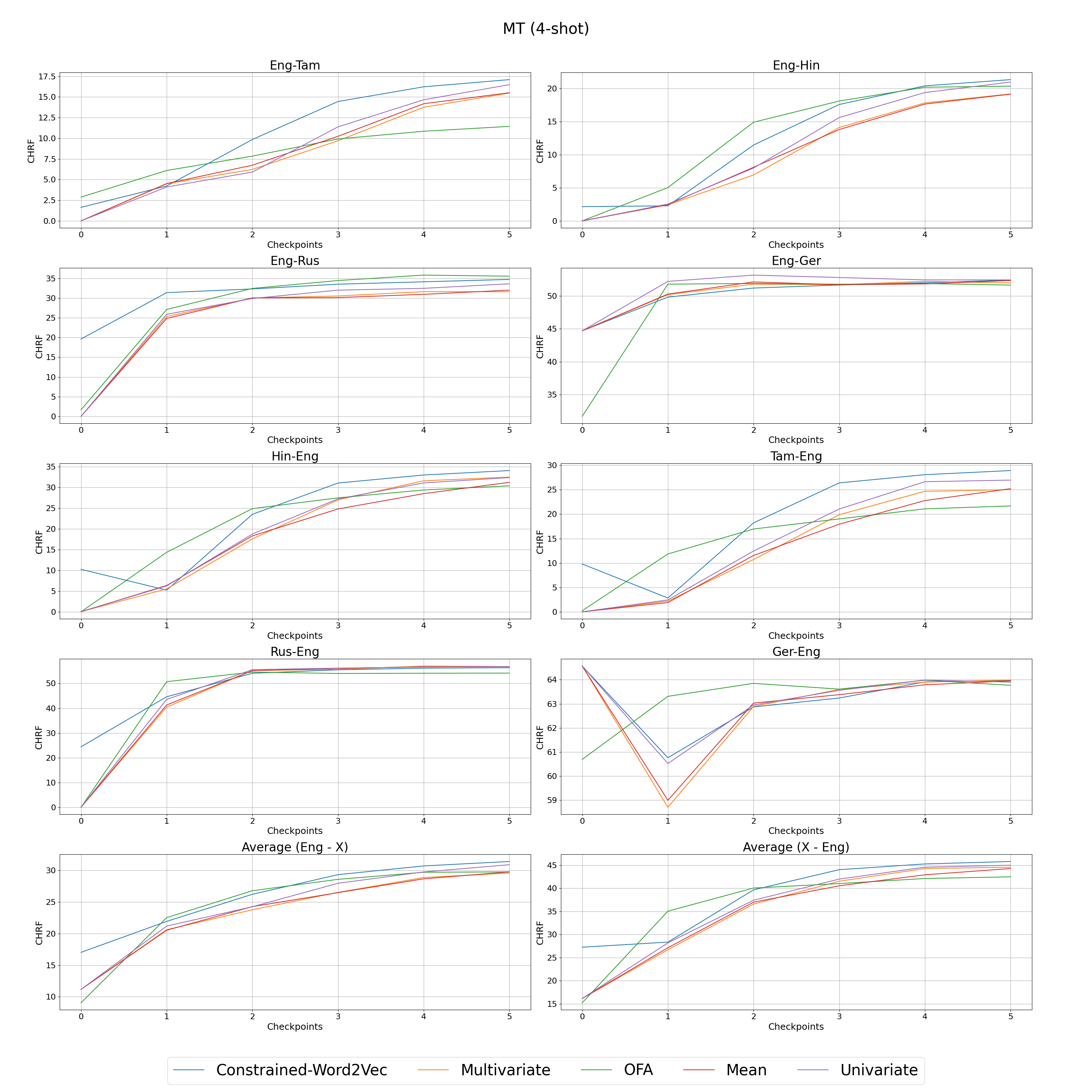}
    \caption{MT 4-shot evaluation of expanded LLaMA2 models}
    \label{fig:enter-label4}
\end{figure*}

\newpage

\begin{figure*}
    \centering
    \includegraphics[scale = 0.30]{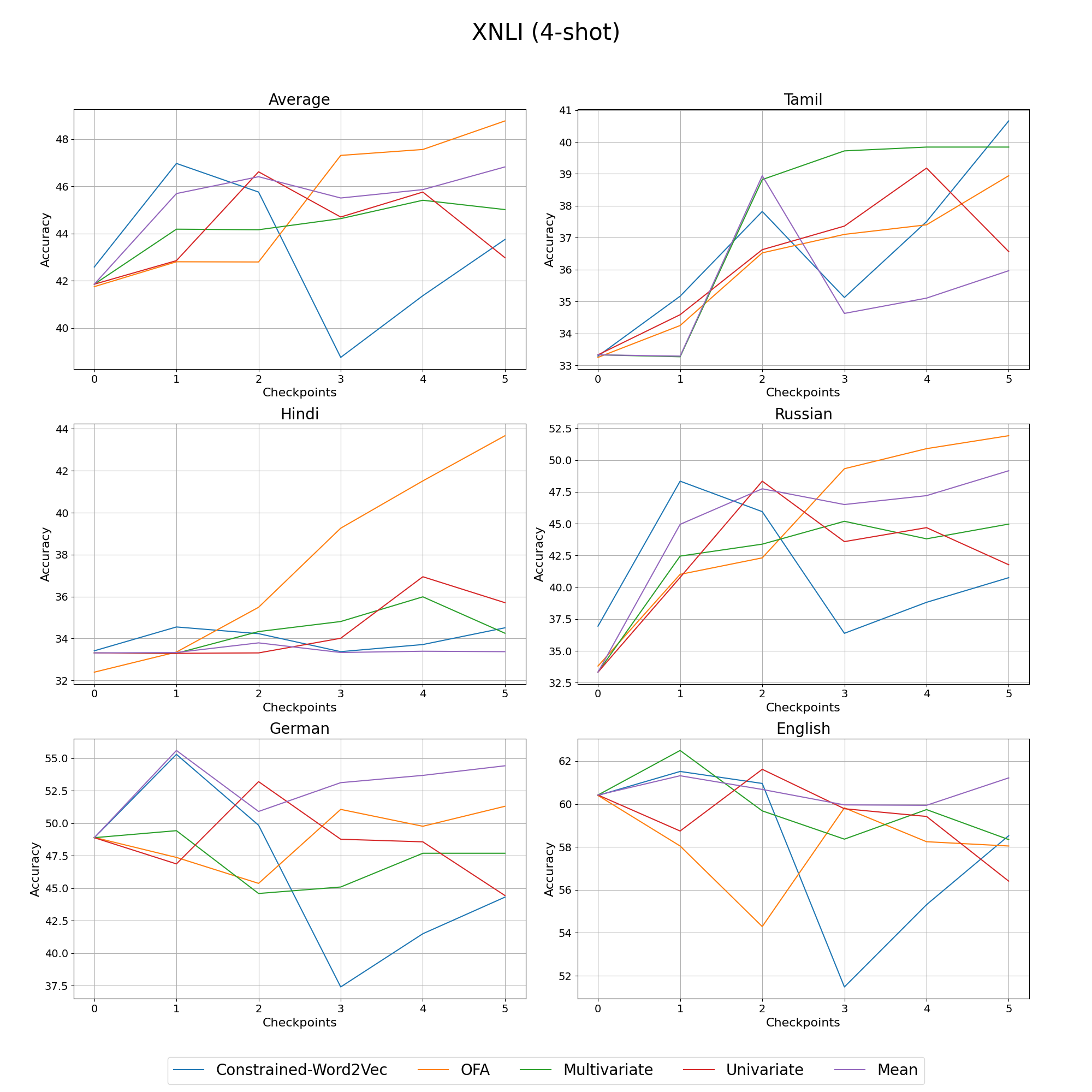}
    \caption{XNLI 4-shot evaluation of expanded LLaMA2 models}
    \label{fig:enter-label5}
\end{figure*}

\newpage
\begin{figure*}
    \centering
    \includegraphics[scale = 0.30]{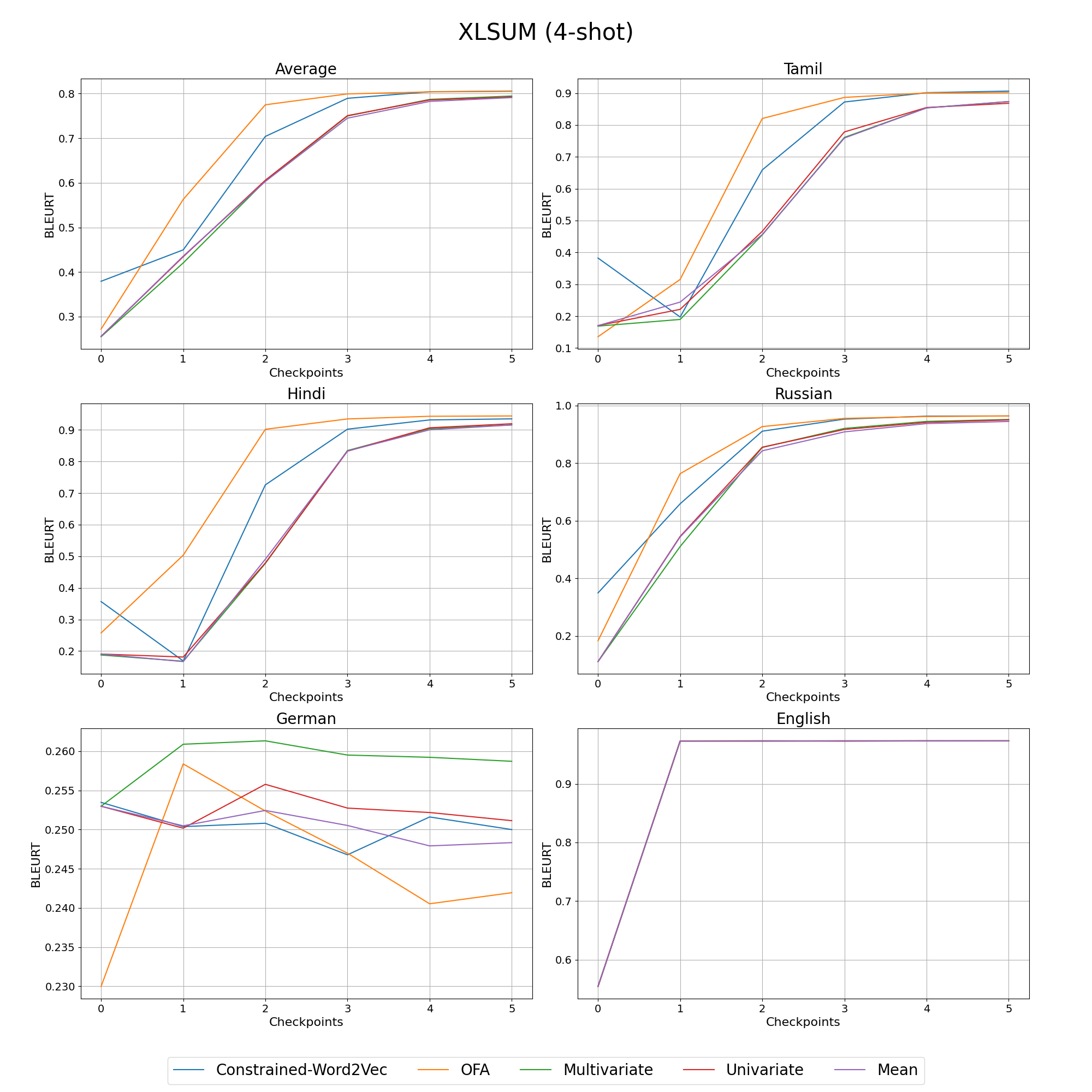}
    \caption{XLSUM 4-shot evaluation of expanded LLaMA2 models}
    \label{fig:enter-label6}
\end{figure*}

\newpage
\begin{figure*}
    \centering
    \includegraphics[scale = 0.30]{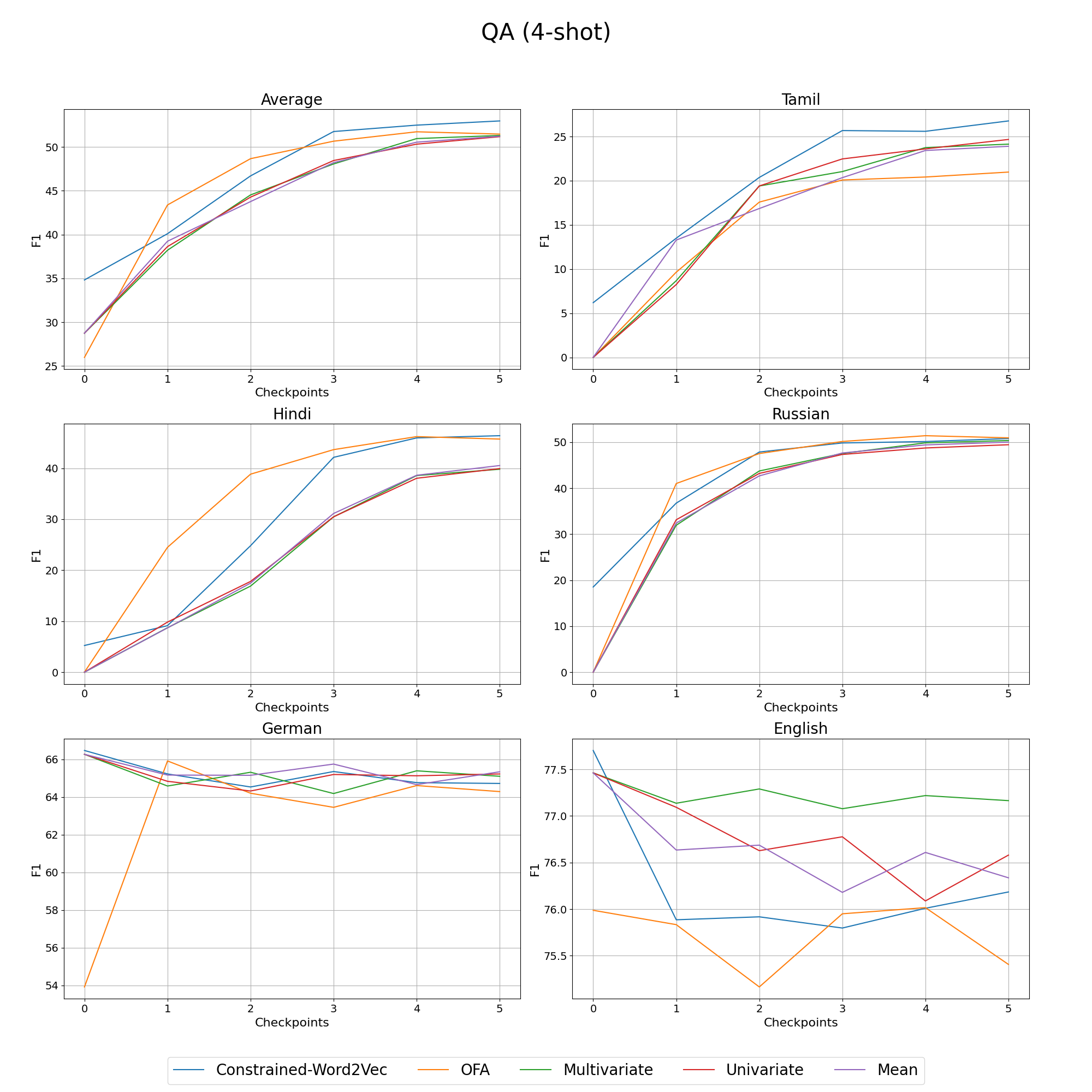}
    \caption{QA 4-shot evaluation of expanded LLaMA2 models}
    \label{fig:enter-label7}
\end{figure*}

\end{document}